
\documentclass[letterpaper, 10 pt, conference]{ieeeconf}  


\IEEEoverridecommandlockouts                              

\overrideIEEEmargins                                      




\usepackage{xcolor}
\usepackage{amssymb}
\usepackage{graphicx}
\usepackage{mathtools}
\usepackage{amsmath}
\usepackage{gensymb}
\usepackage{float}
\usepackage{multirow}
\usepackage{makecell}
\usepackage{tabularx}
    \newcolumntype{L}{>{\raggedright\arraybackslash}X}
\usepackage[utf8]{inputenc}
\usepackage[english]{babel}

\usepackage{amsthm}

\usepackage{subcaption}
\let\oldemptyset\emptyset
\let\emptyset\varnothing

\newcommand{\taninv}{\tan^{-1}}
\DeclareMathOperator{\sign}{sign}
\newcommand{\no}{\noindent}

\newcommand{\mc}[1]{\mathcal{#1}}
\newcommand{\bb}[1]{\mathbb{#1}}

\newtheorem{prop}{Proposition}[section]

\newcommand\Tstrut{\rule{0pt}{2.6ex}}         
\title{\LARGE \bf
CoMet: Modeling Group Cohesion for Socially Compliant Robot Navigation in Crowded Scenes

}

\author{Adarsh Jagan Sathyamoorthy, Utsav Patel, Moumita Paul, Nithish K Sanjeev Kumar, Yash Savle, \\ and Dinesh Manocha
}

\begin{document}

\maketitle
\thispagestyle{empty}
\pagestyle{empty}

\begin{abstract}
We present CoMet, a novel approach for computing a group's cohesion and using 
that to improve a robot's navigation in  crowded scenes. Our approach uses a novel cohesion-metric that builds on prior work in  social psychology. We compute this metric by utilizing various visual features of pedestrians from an RGB-D camera on-board a robot. Specifically, we detect characteristics corresponding to proximity between people, their relative walking speeds, the group size, and interactions between group members. We use our cohesion-metric to design and improve a navigation scheme that accounts for different levels of group cohesion while a robot moves through a crowd. We evaluate the precision and recall of our cohesion-metric based on perceptual evaluations. We highlight the performance of our social navigation algorithm on a Turtlebot robot and demonstrate its benefits in terms of multiple metrics: freezing rate (57\% decrease), deviation (35.7\% decrease), and path length of the trajectory(23.2\% decrease).

\end{abstract}

\section{Introduction}
Mobile robots  are increasingly being used in crowded scenarios in indoor and outdoor environments. Applications for these robots include surveillance, delivery, logistics, etc.  In such scenarios, the robots need to navigate in an unobtrusive manner and also avoid issues related to  sudden turns or freezing~\cite{freezing1}. Moreover, the robots need to integrate well with the physical and social environments.

Extensive research in social and behavioral psychology suggests that crowds in real-world scenarios are composed of (social) groups. A group is generally regarded as a meso-level concept and corresponds to two or more pedestrians with similar goals over a short or long period of time. As a result, the pedestrians or agents in a group exhibit similar movements or behaviors. 
It is estimated that up to 70\% of observed pedestrians in real-world crowds are part of a group~\cite{dyn-group-behavior,crowd-ped-dynamics-empirical}. Therefore, it is important to understand group characteristics and dynamics to perform socially-compliant robot navigation~\cite{socially-aware,WB1,frozone}. 

The problem of efficient robot navigation among pedestrians has been an active area of research.
Most existing robot navigation algorithms consider walking humans or pedestrians as separate obstacles~\cite{JHow1,socially-aware, frozone,densecavoid}. Some techniques tend to predict trajectories of each pedestrian using learning-based methods~\cite{densecavoid} but do not account for the influence of group characteristics on individuals. This could lead to obtrusive trajectories that may cut through groups of friends or families. Other methods use simple and conservative methods to detect locally sensed clusters of pedestrians and compute paths around them~\cite{frozone}. However, they do not work well as the crowd density increases.

One characteristic of groups that could be utilized to address these problems is the social-cohesion or the collective behavior of the group's members. This is directly linked to the inter-personal relationships between group members. For example, a group of friends or family has higher cohesion than a group of strangers~\cite{friendsVsstrangers, self-loafing-group-cohesiveness}. Cohesion is inversely related to the \textit{permeability} of the group in social settings, i.e. whether another individual can cut through the group while walking \cite{group-permeability}. Many theories have been proposed in psychology and sociology to identify the human behaviors or features that are good indicators of group cohesion. Such features include proximity between group members \cite{hall-proxemics}, walking speed \cite{friendly-dyads}, group size \cite{group-permeability}, context or environment \cite{scene-ind-group-profiling}, etc. Estimating cohesion could help a robot plan a better or more socially compliant trajectory based on the context.  For example, in dense crowds (i.e. pedestrian density is more than $1$ person/$m^2$), the robot could navigate around a group that has high cohesion or the robot could move between members of a group that has low cohesion, similar to how humans navigate in crowded scenarios.



\begin{figure}[t]
      \centering
      \includegraphics[width=\columnwidth,height=6.25cm]{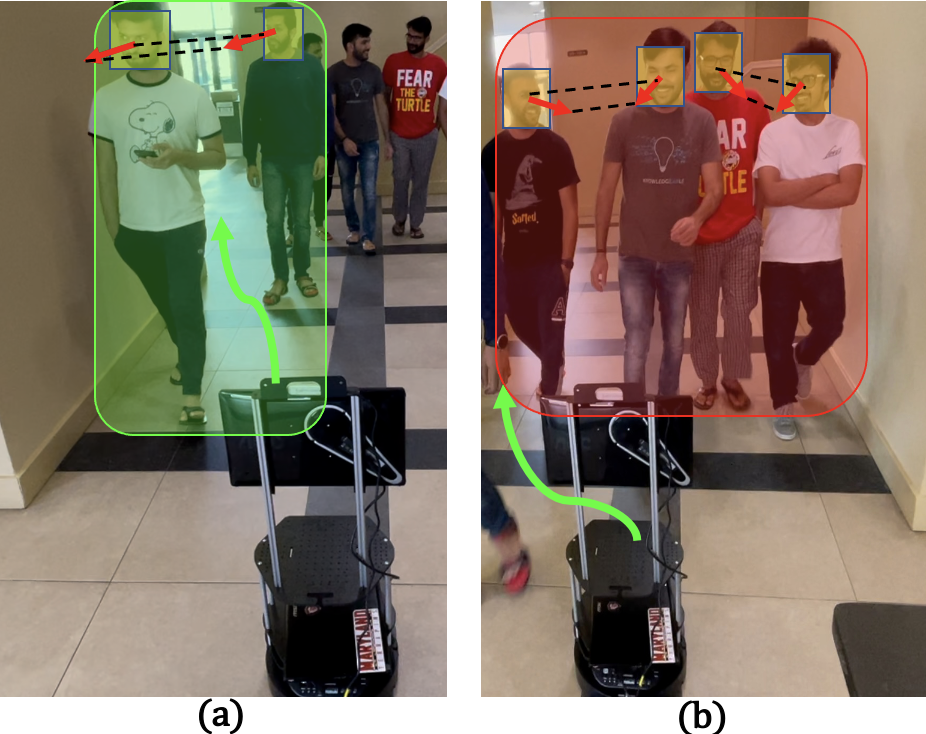}
      \caption {\small{Our novel navigation method uses our CoMet metric to compute a collision-free trajectory for a robot in real-world scenarios. CoMet identifies groups in crowds and detects intra-group proximity, walking speed, group size and interactions to estimate a group's cohesion. (a) In dense scenarios, our navigation algorithm identifies a low-cohesion group (green bounding box) and navigates between its group members (green path) by assuming human cooperation for navigation; (b) Our method detects a high cohesion group (red bounding box) and plans a trajectory around it. Overall, our method improves social-compliance and the naturalness of the trajectory (Section \ref{sec:comet-objective}).}}
      
      
      \label{fig:cover-image}
      \vspace{-15pt}
\end{figure}


\textbf{Main Contributions:} We present a novel algorithm to perform socially compliant navigation in crowded scenes. Our approach uses perception algorithms to identify groups in a crowd using visual features. We also present a novel group cohesion metric and efficient algorithms to compute this metric in arbitrary crowds using deep learning. We combine our cohesion metric with learning-based techniques to generate trajectories that tend to follow the social norms. Some of the novel components of our approach include:
\begin{itemize}

\item We present CoMet, a novel metric for estimating group cohesion. Our approach is based on social psychology studies and exploits features such as proximity between people, walking speeds, group sizes, and interactions. Our method uses an RGB-D sensor to detect groups and these visual features. CoMet has a near 100\% precision and recall when identifying low-cohesion groups, when evaluated in real-world pedestrian or crowd datasets.

\item We present a novel CoMet-based navigation method that accounts for group cohesion, while ensuring social-compliance in terms of naturalness, large deviations and freezing behaviors. Our formulation assumes human cooperation in dense crowds and plans less conservative trajectories than prior methods. We prove that the deviation angles computed by our method are less than or equal to the deviation angles computed using a prior social navigation algorithm~\cite{frozone}.

\item We implement CoMet on a real Turtlebot robot equipped with a commodity RGB-D sensor and demonstrate improvements in terms of social navigation. Our qualitative evaluations in dense scenes indicate that CoMet accurately identifies the cohesion in different groups (see Fig.\ref{fig:cover-image}). This enables the Turtlebot robot to navigate through a group based on our cohesion metric. Compared to prior social navigation algorithms, we demonstrate improved performance in terms of following metrics: freezing rate (up to 57\% decrease), path deviation or turns (up to 35.7\% decrease), and path length (up to 23.2\% decrease). 

\end{itemize}

\section{Related Works}
In this section, we briefly review prior work in robot navigation among crowds, pedestrian and group detection, and group interactions. 

\subsection{Clustering-based Group Detection}
Many techniques have been proposed in computer vision for pedestrian and group detection in a crowd. The first step in group detection is to detect individuals in the images or videos. Methods for this step include many deep learning-based approaches for pedestrian detection and tracking~\cite{YOLOv3} and improved methods for high density crowds ~\cite{densepeds}. These methods have been extended from individual pedestrian detection to group detection~\cite{disc-groups-imgs,social-groups-videos}. These group-based methods typically use different kinds of clustering based on the proximities between people, their trajectories and their velocities to segregate them into groups \cite{social-groups-videos,crowd-features-video-seq}.

\subsection{Group Behavior and Interaction Detection}
Different techniques have been proposed for detecting group behaviors and interactions in computer vision and social psychology~\cite{scene-ind-group-profiling,comparison-behavior-features} as well as event identification \cite{cluster-crowd-behavior}. Behavior detection and event identification involve the analysis of different features (e.g., collectiveness, stability, uniformity) that represent how people move and interact in a crowd. These include individuals, groups, leaders, followers, etc. They also involve detecting scenarios where groups either merge together or split while walking or running.  

Other relevant techniques detect interactions among people in a group based on F-formations \cite{disc-groups-imgs,interaction-murino}. These algorithms estimate features such as people's body and head poses and identify the individuals who are facing each other. Our approach is complimentary to these methods and extends them by using many other features, including proximity, walking, and interaction, to gauge group cohesion.  





\subsection{Robot Navigation and Social Compliance}
Many recent works have focused on socially-compliant navigation \cite{Humanaware-survey,socially-aware,WB1,frozone,Alahi,sociosense,kim2015brvo}. The underlying goal is to design methods that not only compute collision-free trajectories but also comply with social norms that increase the comfort level of  pedestrians in a crowd. At a broad level, the three major objectives of social navigation are comfort, naturalness, and high-level societal rules~\cite{Humanaware-survey}. For example, a robot needs to avoid movements that are regarded as obtrusive to pedestrians by following rules related to how to approach and pass a pedestrian~\cite{socially-aware,frozone}. Other techniques are based on modeling intra-group interactions~\cite{Alahi} or by learning from real-world static and dynamic obstacle behaviors~\cite{robot-nav-crowd-drl}.

Many techniques for social navigation have been proposed based on reinforcement learning (RL) or inverse reinforcement learning (IRL). The RL-based methods \cite{JHow1, JHow2, JiaPan1, frozone} mostly focus on treating each pedestrian as a separate obstacle to avoid collisions, sudden turns, or large deviations.
IRL methods are driven by real-world natural crowd navigation behaviors~\cite{kitani,pfeiffer} and are used to generate trajectories with high levels of naturalness. However, they can result in unsafe trajectories and may not work well as the crowd density increases. 
Some methods model pedestrian behaviors by learning about their discrete decisions and the variances in their trajectories~\cite{socially-compliant-1}.
Other works have modeled human personality traits \cite{sociosense} or pedestrian dominance \cite{ped-dominance} based on psychological characteristics for trajectory prediction and improved navigation. Bera et al.~\cite{entitivity} present an algorithm to avoid negative human reactions to robots by reducing the entitativity of robots.  Our work on modeling group cohesion is complimentary to these methods.
 


\section{Background}
In this section, we give an overview of prior work in social psychology, pedestrian tracking and robot navigation that is used in our approach. We also introduce the symbols and notation used in the paper.

\subsection{Social Psychology} \label{sec:why-choose-features}
We use four features based on prior work in social psychology to estimate the cohesion of a group. We give a brief overview of each of these features.

\textbf{Proximity:} Proximity is chosen based on the proxemics principles established by  Hall~\cite{hall-proxemics}. The underlying theory states that humans have an intimate space, a social and consultative space and a public space when interacting with others. We extend this idea to unstructured social scenarios where people or pedestrians walk, stand, or sit together. In general, humans maintain a closer proximity to other people with whom they closely interact (high cohesion). Many techniques have been proposed for simulating pedestrian movement~\cite{curtis2016menge} and collision queries~\cite{govindaraju2005quick}.


\textbf{Walking Speed:} \cite{friendly-dyads} studied individual and mixed gender groups' walking speeds in a controlled environment and observed significantly slower speeds when people walk with their romantic partners. 
\textit{Assertion 1:} A slower-than-average walking speed in a group indicates a close relationship between group members (high cohesion). 

\textbf{Group Size:} \cite{group-permeability} analyzed the perceptions of people when passing through a group of two and four people in a university hallway. It was observed that people tend to penetrate through the 4-person group less than the 2-person group.
\textit{Assertion 2:} Humans perceive the cohesion of a bigger group to be higher (implying lower permeability) than that of a smaller group. Permeability of a group is a measure of the resistance that a moving non-group entity faces while passing between the members of a group. 

\textbf{Interactions:} Jointly Focused Interactions (JFI) \cite{interaction-1961} entail a sense of mutual activity and engagement between people and imply their willingness to focus their attention on others. Therefore, it is chosen as an indicator for cohesion. We extrapolate JFI to signify a higher level of cohesion between group members. Visually, interactions can be detected by estimating peoples' 3-D face vectors \cite{interaction-murino} and detecting when the vectors point towards each other.

\subsection{Notations and Definitions}
We highlight the symbols and notation used, in Table \ref{tab:symbol_defn}. We use \textit{i}, \textit{j}, and \textit{k} to represent indices. All distances, angles and velocities are measured relative to a rigid coordinate frame attached to the camera (on the robot) used to capture the scene. The X-axis of this frame points outward from the camera and the Y-axis points to the left, with its origin at the center of the image. We use such a representation since our overall approach is local, and has no global knowledge of the environment.
 
The time interval between two consecutive RGB images in the stream is $\Delta t$.

\begin{table}[]
\centering
\begin{tabularx}{\linewidth}{|c|L|} 
\hline
\textbf{Symbols} & \textbf{Definitions}  \\
\hline
$\textbf{p}^{i, t}$ & Position vector of person i at time t relative to the robot/camera coordinate frame. $\textbf{p}^{i, t} = [x^{i, t}, y^{i, t}]$\\
\hline
$\textbf{v}^{i, t}$ & Walking velocity vector of person i at time t relative to the robot/camera coordinate frame. $\textbf{v}^{i, t} = [\Dot{x}^{i, t}, \Dot{y}^{i, t}]$ \\
\hline
$I_{rgb}^t$ & RGB image captured at time instant t of width w and height h. \\
\hline
$I_{depth}^t$ & Depth image captured at time instant t of width w and height h. \\
\hline
$\bb{B}^i$ & Bounding box of person with ID i. \\
\hline
$[x^{\bb{B}^i}_{cen}, y^{\bb{B}^i}_{cen}]$ & Centroid of the bounding box $\bb{B}^i$. \\
\hline
$\mathbf{x}^{i, t}$ & State vector used in Kalman filter for walking vector estimation for person i at time instant t. $\mathbf{x}^{i, t} = [x^{i, t} \,\, y^{i, t} \,\, \Dot{x}^{i, t} \,\, \Dot{y}^{i, t}]^T$ \\
\hline
$ID^t$ & Set of person IDs detected in $I^t_{rgb}$. \\
\hline
$n^k$ & Number of members in Group $G^{k,t}$. \\
\hline
N & Number of people detected in an RGB image. \\
\hline
$\mathbf{f}_p^i, \mathbf{f}_o^i$ & Vectors for person i's face position and orientation relative to the camera coordinate frame. \\ 
\hline
\begin{tabular}{@{}c@{}}$K_p, K_w, K_s$ \\ $K_i$ \end{tabular} & Proportionality and weighing constants for each feature \\
\hline
$CH()$ & Convex Hull function with points as its arguments. \\
\hline
\end{tabularx}
\caption{\small{List of symbols used in CoMet and their definitions.}}
\label{tab:symbol_defn}
\vspace{-10pt}
\end{table}

\subsection{Frozone} \label{sec:frozone-background}
Frozone \cite{frozone} is a navigation method that tackles the Freezing Robot Problem (FRP)~\cite{freezing1}  arising in crowds. At the same time, it can generate  trajectories that are less obtrusive to pedestrians. The underlying algorithm computes a Potential Freezing Zone (PFZ), which corresponds to a configuration of obstacles where the robot's planner halts the robot and starts oscillating for a period of time when it deems that all velocities could lead to a collision. Frozone's formulation is conservative and not suitable for dense crowd navigation ($> 1$ person/$m^2$). In addition, it treats pedestrians as individual obstacles. The main steps in Frozone's formulation are: (1) Identifying \textit{potentially freezing} pedestrians who could cause an FRP based on their walking vectors and predicting their positions after a time horizon $t_{h}$; (2) Constructing a Potential Freezing Zone ($PFZ_{froz}$) as the convex hull of the predicted positions $\hat{\textbf{p}}^{i, t + t_h}_{pf}$ (see 5-sided convex polygon in Fig. \ref{fig:proof}) for all the \textit{potentially freezing} pedestrians. $PFZ_{froz}$ is formulated as, 

\vspace{-10pt}
\begin{equation}
    PFZ = Convex Hull (\hat{\textbf{p}}^{i, t + t_h}_{pf}), \quad i \in {1,2,...,P_f.}
\end{equation}
and (3)Computing a deviation angle for the robot to avoid $PFZ_{froz}$ if its current trajectory intersects with it. $PFZ_{froz}$ corresponds to the set of locations where the robot has the maximum probability of freezing and being obstructive to the pedestrians around it. The deviation angle to avoid it is computed as 

\vspace{-10pt}
\begin{equation}
    \phi_{froz} = min(\phi_1, \phi_2),
    \label{eqn:frozone-deviation}
\end{equation}
\no where $\phi_1$ and $\phi_2$ are given by,
\begin{gather}
    \phi_1 = \underset{R_{z,\phi_1}\textbf{v}_{rob}t_{h} \notin PFZ_{froz}}{\operatorname{argmin}} \left(dist(R_{z,\phi_1}\cdot \textbf{v}_{rob}\cdot t_{h}, \,\, \textbf{g}_{rob})\right), \label{eqn:phi1}\\
    \phi_2 = \taninv(y^{near, t} / x^{near, t}), \quad \phi_2 \ne 0.
    \label{eqn:phi2}
\end{gather}

\no Here, $R_{z,\phi_1}$ is the 3-D rotation matrix about the Z-axis (perpendicular to the plane of the robot), $\textbf{v}^{rob}, \textbf{g}^{rob}$ represent the current velocity and the goal of the robot, and $[x^{near, t}, y^{near, t}]$ denotes the current location of the nearest freezing pedestrian relative to the robot. This point is in the PFZ's exterior. For navigating the robot towards its goal, and handling static obstacles and dense crowds, a Deep Reinforcement Learning (DRL)-based method \cite{JiaPan1} is used. However, the resulting navigation may cut through groups regardless of their cohesion (see Fig. 1(a)). As a result, the robot's trajectory may not be socially compliant. 
\section{CoMet: Modeling Group Cohesion}
In this section, we present our group cohesion metric, which first classifies pedestrians into groups and then measures their closeness or cohesion. Our method runs in real-time, taking a continuous stream of RGB and depth images as input and detecting the group features highlighted in Section \ref{sec:why-choose-features}. Our overall approach based on these features is shown in Fig. \ref{fig:sys-arch}.

\begin{figure}[t]
      \centering
      \includegraphics[width=7.0cm,height=6.0cm]{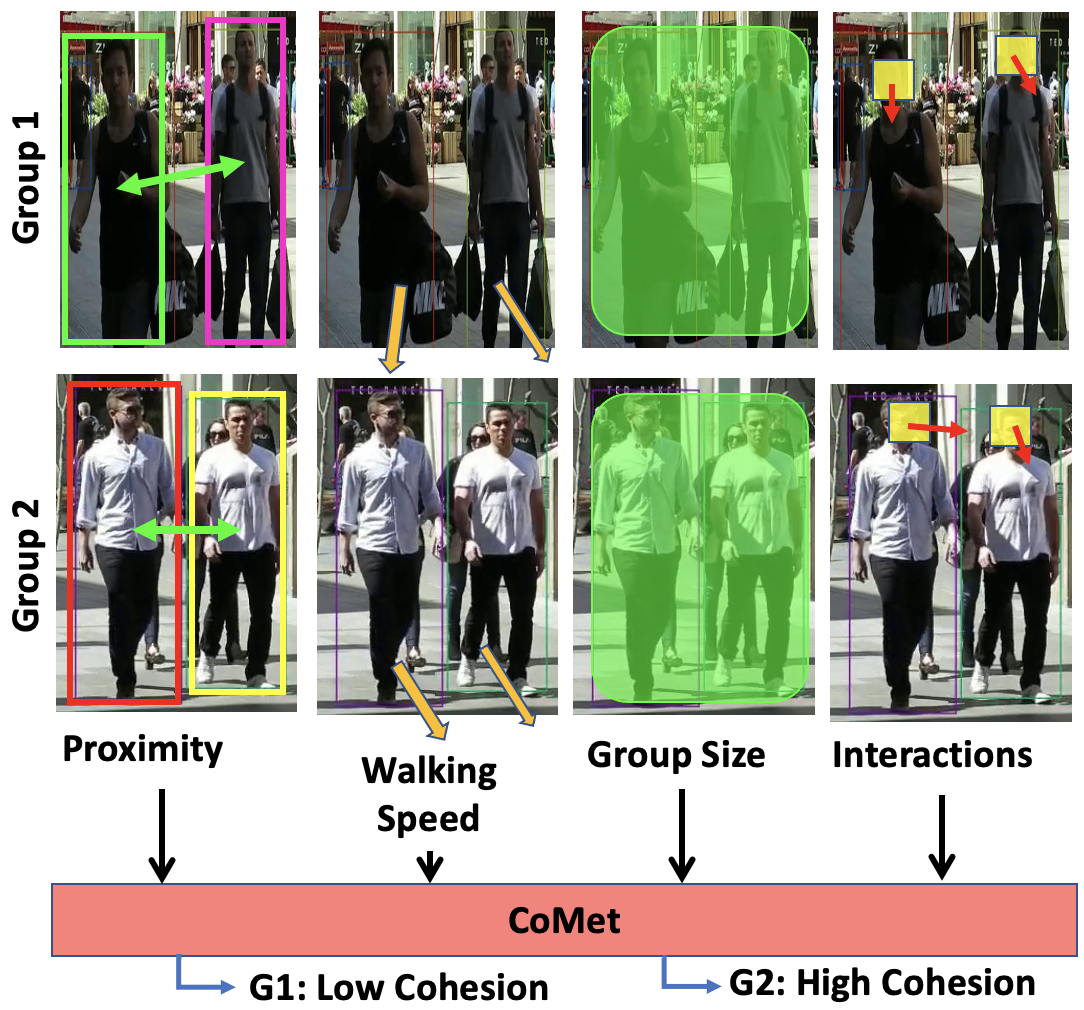}
      \caption {\small{Computation of Group Cohesion Metric: We use four features to compute the metric. We highlight how our approach is used on two different groups of pedestrians at a given time instant. These groups are shown in different rows. Group 1 consists of two pedestrians in close proximity, walking away from each other. Their distance is increasing in subsequent frames and they are looking in different directions. This implies little interaction and low group cohesion. Group 2 has two people walking together with their faces oriented towards each other, which indicates high interaction and a high group cohesion.}}
      \label{fig:sys-arch}
      \vspace{-15pt}
\end{figure}



\subsection{Detecting Group Features} \label{sec:detect-group-features}
In this section, we first describe how we track and localize people, detail conditions for a set of people to be classified as a group, and then explain efficient techniques to detect group features from RGB and depth images.

\subsection{Pedestrian Tracking and Localization} 
A key issue in detecting the features mentioned in Section \ref{sec:why-choose-features} is to first detect, track, and localize each pedestrian position relative to the camera frame in a continuous stream of RGB images. We use YOLOv5~\cite{yolo} and Deep Sort~\cite{deepsort} algorithms to detect and track people, respectively. YOLOv5 outputs a set of bounding boxes $\mc{B}= \{ \bb{B}^{i}$\} for each detected pedestrian \textit{i} in an RGB image at time instant t (denoted as $I_{rgb}^t$). $\bb{B}^{i}$ is denoted using its top-left and bottom-right corners in the image-space or pixel coordinates. In addition, we also assign a unique integer number as an ID for each detected pedestrian. 

Next, to accurately localize people, the distance of each detected person relative to the camera coordinate frame must be estimated. To this end, we use a depth image $I_{depth}^t$, every pixel of which contains the proximity (in meters) of an object at that location of the image. The pixels in $I_{depth}^t$ contain values between a minimum and maximum distance range, which depends on the camera used to capture the image.

\subsubsection{Group Classification} 
Let us consider any set of people's IDs $G^{k, t} \subseteq ID^t$. At any time t, if the following conditions hold, 

\vspace{-10pt}
\begin{gather}
        \|\mathbf{p}^{i,t} - \mathbf{p}^{j,t} \|_2 \le \Gamma \quad \forall \, i, j \in G^{k, t} \\
        \|\mathbf{p}^{i,t} - \mathbf{p}^{j,t} \|_2 \ge \|\mathbf{p}^{i,t} + \mathbf{v}^{i,t} - (\mathbf{p}^{j,t} + \mathbf{v}^{j,t}) \|_2 .
\end{gather}

\no then the set $G^{k, t}$ is classified as a \textit{group} in the image $I^t_{rgb}$. Here, $\Gamma$ is a distance threshold set manually. The first condition ensures that people are close to each other and the second condition ensures that the group members walk in the same direction. When $|\textbf{v}^i\|, \|\textbf{v}^j\| = 0$ (static groups), only the first condition is used for grouping.

\vspace{5pt}
\subsubsection{Estimating Proximity} 
To estimate the proximity between people at time instant t, first the bounding boxes detected in the RGB image by YOLOv5 are superimposed over the depth image. To estimate the distance of a person \textit{i} from the camera ($d^{i, t}$), the mean of all the pixel values within a small square centered around $[x^{\bb{B}^i, t}_{cen}, y^{\bb{B}^i, t}_{cen}]$ is computed. The angular displacement $\psi^{i, t}$ of person i relative to the camera can be computed as, $\psi^{i, t} = \left(\frac{\frac{w}{2} - x^{\bb{B}^i, t}_{cen}}{w}\right) * FOV_{RGBD}$. Here $FOV_{RGBD}$ is the field of view of the RGB-D camera. Person i's location relative to the camera can be computed as $[x^{i, t} \,\, y^{i, t}]$ = $d^{i, t}$ * [$\cos{\psi^{i, t}} \,\, \sin{\psi^{i, t}}$]. 
\vspace{5pt}
The distance between a pair of people i and j can then be computed as, 
$dist(i, j) = \sqrt{(x^{i, t} - x^{j, t})^2 + (y^{i, t} - y^{j, t})^2}$.

\vspace{10pt}

\subsubsection{Estimating Walking Speed and Direction} \label{sec:walk-vec} 
To estimate the $i^{th}$ person's walking vector $\mathbf{v}^i$ in $I^t_{rgb}$, we use a Kalman filter with a constant velocity motion model. All the detected people in $I^t_{rgb}$ (with their IDs stored in the set $ID^t$) are modeled using the state vector $\mathbf{x}^t$ defined in Table \ref{tab:symbol_defn}. If $ID^t$ contains IDs that were not present in $ID^{t - \Delta t}$, we initialize their corresponding state vectors $\mathbf{x}^t$ with constant values. For all the pedestrians who were detected in previous RGB images, i.e., with previously initialized states, we update their states using the standard Kalman prediction and update the steps \cite{kalman-filter}. We use a zero mean Gaussian noise with a pre-set variance to model the process and sensing noise. 


\subsubsection{Estimating Group Size} 
The size of the group can be trivially computed as the number of IDs in the set $G^{k, t}$.

\subsubsection{Detecting Interactions}
We use two 3-D vectors to represent the position and orientation of a person's face in $I^t_{rgb}$. We use OpenFace \cite{openface} to localize person i's face position relative to the camera coordinate frame ($\mathbf{f}_p^i$) on an RGB image and to obtain a unit vector ($\mathbf{f}_o^i$) for the face orientation. Two individuals are considered to be interacting if their face positions and orientations satisfy the following condition:

\vspace{-15pt}
\begin{equation}
    \|\mathbf{f}_p^i - \mathbf{f}_p^j \|_2 > \|\mathbf{f}_p^i + \mathbf{f}_o^i - (\mathbf{f}_p^j + \mathbf{f}_o^j) \|_2 .
    \label{eqn:interaction-condition}
\end{equation}

\no This condition checks if the distance between two people's face positions is greater than the distance between the points computed by extrapolating the face orientations (see dashed lines in Fig. \ref{fig:cover-image}), if they are facing each other. We reasonably assume that non-interacting people do not face each other.

\subsection{Cohesion Components}
We now discuss how the detected group features can be used for cohesion estimation. Using multiple features to compute cohesion is advantageous in scenarios where not all features are properly able to be sensed. 


\subsubsection{Proximity Cohesion Score} 
We use the average distance between group members to model Hall's proxemics theory as previously extrapolated. As observed in Section \ref{sec:why-choose-features}, cohesion between people is inversely proportional to the distance between them. Therefore, the cohesion score due to proximity is formulated as the reciprocal of the mean distance between group members as

\vspace{-10pt}
\begin{equation}
C_p(G^{k, t}) = K_p \cdot \frac{n^k}{\sum\limits_{\substack{i, j \in G^{k, t} \\ i \ne j}} dist(i, j)}. 
\label{eqn:prox-cohesion}
\end{equation}
\vspace{-10pt}


\subsubsection{Walking Speed Cohesion Score} 
Based on the discussion in Section \ref{sec:why-choose-features}, we next compare the average walking speeds of each group with the average walking speed of all the detected people in $I^t_{rgb}$. Therefore, the cohesion score for a walking group ($\|\textbf{v}^j\| \ne 0 \,\, \forall j \in G^{k, t}$) due to its walking speed is formulated as

\vspace{-10pt}
\begin{equation}
    C_w(G^{k, t}) = K_w \cdot \left( \frac{\sum\limits_{\forall i \in ID^t} \|\textbf{v}^i\|}{N} \right) / \left( \frac{\sum\limits_{\forall j \in G^{k, t}} \|\textbf{v}^j\|}{n^k} \right). \\
\label{eqn:walk-speed-cohesion}
\end{equation}

\no This reflects assertion 1 made in Section \ref{sec:why-choose-features}, since cohesion is inversely proportional to walking speed. The average walking speed of the entire scene is included in this formulation to normalize out the effects of crowding in the scene. If $\sum\limits_{\forall j \in G^{k, t}}\|\textbf{v}^j\| = 0$, i.e., the group is static, then $C_w(G^{k, t}) = K_w \cdot \eta$, where $\eta$ is a user-set large constant value. 

\subsubsection{Group Size Cohesion Score}
Based on assertion 2 in Section \ref{sec:why-choose-features}, the cohesion of a group \textit{k} is directly proportional to the group size ($n^k$). Therefore, the group size cohesion score is computed as

\vspace{-5pt}
\begin{equation}
    C_s(G^{k, t}) = K_s \cdot n^k.
    \label{eqn:grp-size-cohesion}
\end{equation}




\subsubsection{Interaction Cohesion Score}
The interaction condition between any two people in a group (Equation \ref{eqn:interaction-condition}) can be applied to all pairs in a group, and it's contribution to the cohesion score of a group can be re-written as

\vspace{-10pt}
\begin{equation}
    C_i(G^{k, t}) =  K_i \cdot \frac{1}{n^k} \cdot \sum\limits_{\substack{i \ne j \\ i, j \in G^{k, t} }}\frac{\sign(\theta_{ij})}{\cos{\theta_{ij}}} \quad \theta_{i,j} \in [-\frac{\pi}{4}, \frac{\pi}{4}].
    \label{eqn:interaction-cohesion}
\end{equation}
\no Here $\theta_{ij}$ is the angle between face orientation vectors $\mathbf{f}_o^i$ and $\mathbf{f}_o^j$ in the X-Y plane of the camera coordinate system. $\sign()$ is the signum function. $\theta_{ij}$ is limited to $[-\frac{\pi}{4}, \frac{\pi}{4}]$ since face orientations are accurate in this range. Intuitively, we want the cohesion score to be positive and greater than 1, when people are facing towards each other and negative otherwise. Therefore, we choose the ratio $\frac{\sign(\theta)}{\cos{\theta}}$, as it belongs to the range $[-\sqrt{2}, -1) \cup \{0\} \cup [1, \sqrt{2}]$ when $\theta_{i,j} \in [-\frac{\pi}{4}, \frac{\pi}{4}]$. Since $\cos()$ is an even function, the $\sign()$ function ensures that the formulation is sensitive to the sign of the angle.





\subsection{CoMet: Overall Group Cohesion Metric}
Using the individual cohesion scores in Equations \ref{eqn:prox-cohesion}, \ref{eqn:walk-speed-cohesion}, \ref{eqn:grp-size-cohesion},\ref{eqn:interaction-cohesion}, the total cohesion score for a group at time t ($G^{k, t}$) is given as

\vspace{-10pt}
\begin{equation}
    C_{tot}(G^{k, t}) = (C_p + C_w + C_s + C_i). 
\label{eqn:final-score}
\end{equation}
\no Here, $G^{k, t}$ is omitted in the RHS for readability. Note that $K_p, K_w, K_s, K_i$ weigh the different features before adding them to the total cohesion score. If any of the features are not detectable, their contribution to $C_{tot}$ will be zero. This acts as a measure of confidence, as our approach is able to better compute a group's cohesion when more features are detected. Our formulation is not learning-based due to the lack of extensive datasets with annotations of cohesion or related metrics for groups.

\begin{prop}
The value of the overall cohesion metric $C_{tot}(G^{k,t})$ for a group is bounded.
\end{prop}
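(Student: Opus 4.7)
The plan is to bound each of the four terms in the sum $C_{tot} = C_p + C_w + C_s + C_i$ separately and then apply the triangle inequality to conclude $|C_{tot}| \leq |C_p| + |C_w| + |C_s| + |C_i|$. The four weighing constants $K_p, K_w, K_s, K_i$ are fixed, so they can simply be carried along throughout.

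First I would dispatch the two combinatorially bounded terms, $C_s$ and $C_i$. Since a group $G^{k,t}$ is a subset of the detected pedestrian set $ID^t$, its size $n^k$ is bounded above by some $N_{\max}$, the maximum number of people the detector can return in a single RGB frame (finite for any fixed image resolution and field of view). Hence $C_s = K_s\, n^k \leq K_s N_{\max}$. For $C_i$, the excerpt already observes that each summand $\sign(\theta_{ij})/\cos\theta_{ij}$ lies in $[-\sqrt{2},-1)\cup\{0\}\cup[1,\sqrt{2}]$, so its absolute value is at most $\sqrt{2}$. Summing over the at most $n^k(n^k-1)$ ordered pairs and dividing by $n^k$ yields $|C_i| \leq K_i (n^k-1)\sqrt{2} \leq K_i (N_{\max}-1)\sqrt{2}$.

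Next I would handle $C_p$ and $C_w$, the harder cases because both involve reciprocals that could in principle blow up. For the proximity score, the group-classification condition forces $\mathrm{dist}(i,j) \leq \Gamma$, which gives the lower bound $C_p \geq K_p/((n^k-1)\Gamma)$; for the upper bound I would invoke a physical minimum separation $d_{\min}>0$ between any two tracked centroids (human bodies have nonzero extent, so the centroid coordinates cannot coincide), yielding $C_p \leq K_p/((n^k-1) d_{\min})$. For $C_w$, the static-group case is immediate since it is \emph{defined} to equal the constant $K_w \eta$; in the walking case, a physiological lower bound $v_{\min}>0$ on each $\|\mathbf{v}^j\|$ (anyone classified as ``walking'' must have nonzero speed) together with a finite upper bound $v_{\max}$ on observable walking speeds gives $0 < C_w \leq K_w v_{\max}/v_{\min}$.

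The main obstacle is precisely these last two cases: a purely algebraic argument would leave $C_p$ and $C_w$ unbounded, so the proof must explicitly invoke physical/modelling assumptions on the perception pipeline (nonzero human body extent and a minimum walking speed for anyone tagged as walking). Once these bounds are in place, combining the four componentwise estimates gives an explicit constant $M$ depending only on $K_p, K_w, K_s, K_i, N_{\max}, \Gamma, d_{\min}, v_{\min}, v_{\max}$, and $\eta$, so that $|C_{tot}(G^{k,t})| \leq M$, which is the desired claim.
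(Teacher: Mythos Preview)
Your approach is the same as the paper's: bound each of $C_p$, $C_w$, $C_s$, $C_i$ separately and add. The paper's argument is terser---it asserts $C_p \in (0, K_p\Gamma]$, caps $C_w$ at $K_w\eta$, declares $K_s n^k$ finite, and places $C_i$ in $[-\sqrt{2}K_i,\sqrt{2}K_i]$---without introducing any of your auxiliary constants $d_{\min}$, $v_{\min}$, $v_{\max}$, $N_{\max}$. Your version is actually the more careful of the two: you correctly flag that the reciprocals in $C_p$ and $C_w$ require an explicit positive lower bound on the denominator (nonzero body extent, a minimum speed for anyone classified as walking), and that $C_s$ and $C_i$ need $n^k$ itself bounded, points the paper simply passes over. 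So there is no gap in your proposal; if anything, you have filled gaps the paper left open.
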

\begin{proof}
The proof to the proposition follows from the fact that $C_p, C_w, C_s, C_i$ are bounded. The value of $C_p \in (0, K_p \cdot \Gamma]$, since $\Gamma$ is used as a threshold to group people. The maximum value of $C_w$ is $K_w\cdot\eta$, a large finite constant that is used when a group is static. $C_s$ is bounded above by $K_g \cdot n^k$, which is finite. $C_i \in [-\sqrt{2}K_i, -K_i) \cup [K_i, \sqrt{2}K_i]$. 
\end{proof}
We use these bounds on the cohesion metric to compute appropriate thresholds that are used to categorize groups as low-, medium- and high-cohesion groups.
\section{Cohesion-based Navigation} \label{sec:comet-navigation}
In this section, we present our socially-compliant navigation algorithm, which uses the group cohesion metric.

\begin{figure}[t]
      \centering
      \includegraphics[width=7.5cm,height=6.5cm]{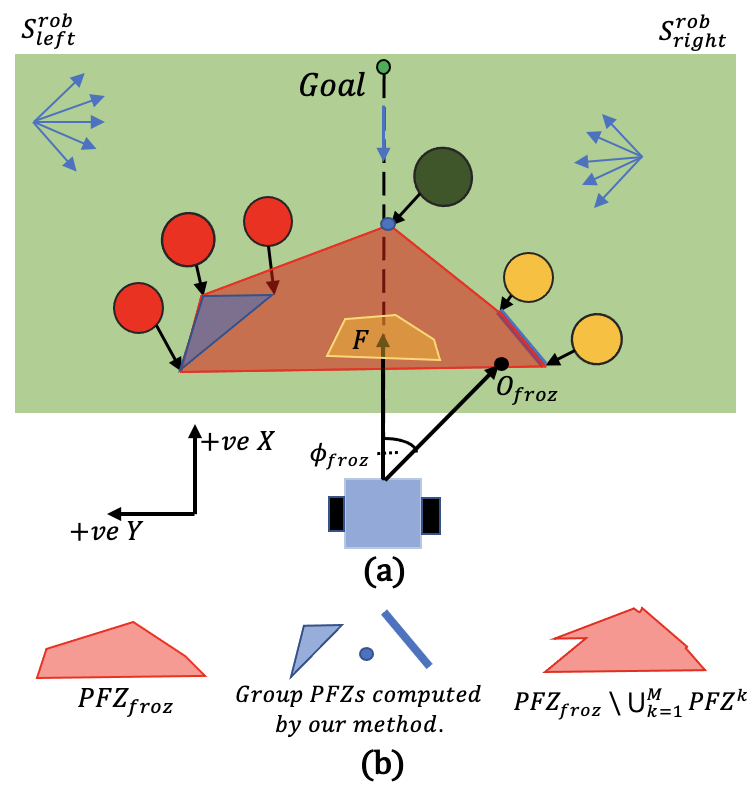}
      \caption {\small{\textbf{(a)} This scenario shows two groups (red and yellow agents in close proximity) and an individual pedestrian (green) walking towards the robot (blue box). The green rectangle denotes the robot's sensing region and the blue arrows denote the potentially freezing walking directions within each half of the sensing region ($S^{rob}_{left/right}$). \textbf{(b)} \textbf{[Left]} $PFZ_{froz}$ (a large 5-sided convex polygon in red) computed by Frozone \cite{frozone} while considering each individual as a separate obstacle. \textbf{[Middle]} Our CoMet-based approach identifies each group and computes the group PFZs ($PFZ^{k}$). This corresponds to the blue triangle for the agents in the red group, a line for the agents in the yellow group, a point for the green individual. \textbf{[Right]}  The region that represents $PFZ_{froz} \backslash \bigcup\limits_{k=1}^{M} PFZ^k$. These shapes are shown separately to observe the differences. Our proposed method is less conservative and results in a smaller deviation from PFZs (no deviation needed in this case) than Frozone~\cite{frozone}. We also highlight one possible subset of $PFZ_{froz} \backslash \bigcup\limits_{k=1}^{M} PFZ^k$, which contains positions with deviations $\le \phi_{froz}$}.}
      \label{fig:proof}
      \vspace{-15pt}
\end{figure}

\subsection{Socially-Compliant Navigation} \label{sec:comet-objective}
Our objective is to improve the naturalness of a robot's trajectory. We attribute three qualities to natural trajectories: (1) Not suddenly halting or freezing (avoiding FRP), (2) Low deviation angles, (3) Not cutting between high cohesion groups (friends, families etc) in a crowd. This is in accordance with humans' walking behaviors where people do not suddenly halt or significantly deviate from their goals \cite{ORCA}, and do not cut through high cohesion groups while walking \cite{group-permeability}. 

We extend Frozone \cite{frozone} (Section \ref{sec:frozone-background}) by considering groups and their cohesions, and prove that our proposed method leads to smaller deviations from the robot's goal, and shorter trajectory lengths. It also does not navigate the robot through high cohesion groups. We assume a higher density in the environment (in terms of crowds and static obstacles) than Frozone's formulation and human cooperation for the robot's navigation. Frozone prevents the robot from moving in front of a pedestrian to avoid slow down in terms of their walking speeds. 



\subsection{CoMet-based Navigation} \label{sec:comet-nav-details}
To improve the naturalness of a robot's trajectory, our proposed method includes the following steps: (1) Identifying potentially freezing groups within the sensing region of the robot and predicting their positions after a time period $t_{h}$; (2) Constructing a PFZ for each group using the predicted future locations of each group member (see blue regions in Fig. \ref{fig:proof});  (3) Computing a deviation angle to avoid the group PFZs while accounting for every group's cohesion. If a feasible solution is not found, the robot navigates between the group with the lowest cohesion in the scene. 

\vspace{5pt}

\textbf{Definition \ref{sec:comet-navigation}.1} (\textit{Potentially Freezing Groups:}) Groups of pedestrians that have a high probability of causing FRP after time $t_h$. Such groups are identified based on conditions of their average walking direction \cite{frozone} (see blue arrows in Fig. \ref{fig:proof}). Groups that satisfy these conditions move closer to the robot as time progresses (proven in \cite{frozone}). We predict the future positions of the potentially freezing group members as

\vspace{-15pt}
\begin{equation}
    \hat{\textbf{p}}^{i, t + t_{h}}_{pf} = \textbf{p}^{i, t}_{pf} + \textbf{v}^{G^{k, t}}_{avg}t_{h}, \quad i \in G^{k, t}, \quad k \in \{1, 2, ..., M.\}.
\end{equation}

\no Here, $\textbf{v}^{G^{k, t}}_{avg}$ is the average group walking vector, computed as the mean of the walking vectors of the group members and, M is the total number of potentially freezing groups. 

\vspace{5pt}
\textbf{Definition \ref{sec:comet-navigation}.2} (\textit{Group PFZ}) The region in the vicinity of a group where the robot has a high probability of freezing. Instead of constructing the single PFZ as the convex hull of all potentally freezing pedestrians (like in \cite{frozone}), we construct a PFZ for each potentially freezing group (see Fig. \ref{fig:proof}) as 

\begin{equation}
    PFZ^{k} = CH (\hat{\textbf{p}}^{i, t + t_{h}}_{pf}), \,\, i \in G^{k, t}, \,\, k \in \{1, 2, ..., M.\}.
\end{equation}

\no Every potentially freezing group's PFZ is computed for a future time instant $t + t_{h}$.

\subsubsection{Computing Deviation Angle}
If the robot's current trajectory navigates it into any of the group PFZs (implying an occurrence of FRP after time $t_h$), a deviation angle $\phi_{com}$ to avoid it is computed. The robot's current velocity $\textbf{v}_{rob}$ is deviated by $\phi_{com}$ using a rotation matrix about the Z-axis as, 

\vspace{-15pt}
\begin{gather}
    \textbf{v'}_{rob} = R_{z,\phi_{com}} \cdot \textbf{v}_{rob}, \\
    \phi_{com} = \underset{\textbf{v'}_{rob} \cdot t_{h} \notin PFZ^{k}}{\operatorname{argmin}} \left(dist(\textbf{v'}_{rob} \cdot t_{h}, \,\, \textbf{g}_{rob})\right) \label{eqn:deviation-1}. 
\end{gather}
This equation implies that our navigation method deviates the robot by the least amount from its goal such that it does not enter any group's PFZ. However, in dense scenarios, when the robot encounters many potentially freezing groups and their corresponding PFZs, Equation \ref{eqn:deviation-1} may not be able to compute a feasible solution for $\phi_{com}$. In such cases, a potential solution is to let the robot pass through the PFZ of a low cohesion group (see Fig. \ref{fig:real-scenarios}a). In such cases, we formulate the deviation angle as, 

\vspace{-15pt}
\begin{gather}
    \phi_{com} = \underset{\textbf{v'}_{rob} \cdot t_{h} \in \mathcal{P}}{\operatorname{argmin}} \left(dist(\textbf{v'}_{rob} \cdot t_{h}, \,\, \textbf{g}_{rob})\right), \\
    \mathcal{P} = PFZ^{min} \, \backslash \, (PFZ^{min} \cap PFZ^{k}) \,\, \forall k \in \{1,2,...,M.\},
\label{eqn:deviation-2}
\end{gather}
\vspace{-10pt}

\no where $PFZ^{min}$ is the PFZ of the group with the minimum cohesion in the scene. Since the permeability of low cohesion groups is high, the above formulation also lowers the probability of freezing.  


\begin{prop}
The deviation angles computed by CoMet-based navigation (Equations \ref{eqn:deviation-1} or \ref{eqn:deviation-2}), and Frozone (\ref{eqn:frozone-deviation}) satisfy the relationship $\phi_{froz} \ge \phi_{com}$.
\label{proposition1}
\end{prop}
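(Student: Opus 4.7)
My plan is to prove the proposition by establishing a set-inclusion relationship between the regions that the two methods seek to avoid, and then use this to compare their feasible sets of deviation angles.

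The key observation is that the predicted pedestrian positions $\hat{\textbf{p}}^{i, t+t_{h}}_{pf}$ used to build the PFZs are identical in both formulations; only the partitioning into convex hulls differs. Since the convex hull of a union of point sets always contains the union of the convex hulls of its subsets, I get the containment
$$\bigcup_{k=1}^{M} PFZ^{k} \;\subseteq\; PFZ_{froz}.$$
This containment is the main lever of the proof and follows immediately from convexity together with the fact that the group partition $\{G^{k,t}\}$ partitions the freezing pedestrians.

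With this in hand, I would first handle the nominal case (Equation \ref{eqn:deviation-1}). The CoMet feasible set $\{\phi : R_{z,\phi}\textbf{v}_{rob}t_{h} \notin \bigcup_{k} PFZ^{k}\}$ is a superset of Frozone's feasible set for $\phi_{1}$, namely $\{\phi : R_{z,\phi}\textbf{v}_{rob}t_{h} \notin PFZ_{froz}\}$. Both problems minimize the same goal-distance objective, so the minimum over the larger feasible set is no greater than the minimum over the smaller one, giving $\phi_{com} \le \phi_{1}$. For $\phi_{2}$, I would use the remark in Section \ref{sec:frozone-background} that the nearest freezing pedestrian's current location lies in the exterior of $PFZ_{froz}$; by the containment above it is also in the exterior of every $PFZ^{k}$, so rotating by $\phi_{2}$ is feasible for CoMet and hence $\phi_{com} \le \phi_{2}$. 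Combining, $\phi_{com} \le \min(\phi_{1}, \phi_{2}) = \phi_{froz}$.

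For the fallback case (Equation \ref{eqn:deviation-2}), invoked only when no feasible angle avoids $\bigcup_{k} PFZ^{k}$, the containment forces Frozone's $\phi_{1}$ to be infeasible as well, so $PFZ_{froz}$ covers every direction in the angular search range. I would argue that under any reasonable interpretation (the robot freezing, or using the maximum admissible turn), $\phi_{froz}$ attains its largest possible value, making the inequality $\phi_{froz} \ge \phi_{com}$ hold vacuously, since $\phi_{com}$ is bounded by the angular extent of $PFZ^{min}$.

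The hard part will be formalizing this fallback case cleanly: I expect to need either an explicit convention for $\phi_{froz}$ when Frozone freezes (e.g., $\phi_{froz} := \pi$), or a mild nonemptiness assumption on $\mathcal{P}$ so that $\phi_{com}$ is well-defined. Once those conventions are fixed, the proof reduces to the two-line set-inclusion argument above plus a short case split.
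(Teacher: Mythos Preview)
Your argument rests on the same lever as the paper's---the containment $\bigcup_{k=1}^{M} PFZ^{k} \subseteq PFZ_{froz}$ together with the observation that both methods minimise the same distance-to-goal objective, so the optimiser over the larger feasible set cannot do worse. The paper packages this a little differently: rather than splitting on $\phi_{1}$, $\phi_{2}$, and the fallback, it exhibits a subset $\mathbf{F}\subseteq PFZ_{froz}\setminus\bigcup_{k} PFZ^{k}$ all of whose points have deviation at most $\phi_{froz}$ (e.g., points just inside the boundary of $PFZ_{froz}$ on the goal side of $O_{froz}$) and notes that CoMet's minimiser must lie in $\mathbf{F}$. Your case decomposition is more explicit and handles the degenerate fallback situation more honestly than the paper does, which is a plus.

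One caveat worth tightening: your justification of the containment assumes the predicted vertex sets are \emph{identical} across the two methods, but by the paper's own definitions CoMet propagates each member with the group-average velocity $\textbf{v}^{G^{k,t}}_{avg}$, whereas Frozone's $PFZ_{froz}$ is built from individually propagated positions. Strictly speaking the two point clouds can differ, so ``convex hull of the union contains the union of convex hulls'' does not automatically apply. The paper's own proof simply asserts the containment without addressing this; you should either adopt the same assumption explicitly or note that it holds whenever the within-group velocity spread is small enough that each $PFZ^{k}$ remains inside $PFZ_{froz}$.
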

\begin{proof}
Consider the scenario shown in Fig. \ref{fig:proof}, with $PFZ_{froz}$ depicted as a 5-sided convex polygon. A vertical line segment connects the robot ($O_{rob}$) to its goal and the robot's deviation is measured relative to it. Frozone deviates $\textbf{v}_{rob}$ such that this point lies on the boundary of $PFZ_{froz}$ (Equation \ref{eqn:phi1}) or in the exterior of $PFZ_{froz}$ (Equation \ref{eqn:phi2}), depending on whichever leads to a lower deviation. Let $O_{froz} = [x_{froz}, y_{froz}]$ be the point on the boundary of $PFZ_{froz}$ to which Frozone deviates (by angle $\phi_{froz}$) the robot. Based on Equations \ref{eqn:deviation-1} and \ref{eqn:deviation-2}, our CoMet-based solution deviates the robot to a point in $PFZ_{froz} \backslash \bigcup\limits_{k=1}^{M} PFZ^k$. 

Since,  $\bigcup\limits_{k=1}^{M} PFZ^k \subseteq PFZ_{froz}$, there exists a set $\mathbf{F} \subseteq PFZ_{froz} \backslash \bigcup\limits_{k=1}^{M} PFZ^k$ such that all positions in $\mathbf{F}$ lead to a deviation angle $\phi \le \phi_{froz}$. For instance, in Fig. \ref{fig:proof}, $\mathbf{F}$ can be a set just within the boundary of $PFZ_{froz}$ with Y-coordinates greater than $y_{froz}$. Since Equations (\ref{eqn:deviation-1}) and (\ref{eqn:deviation-2}) optimize for minimum deviation from the goal, $\phi_{com} \in \mathbf{F}$. This implies that $\phi_{com} \le \phi_{froz}$. The equality holds when $\mathbf{F} = \oldemptyset$ or when the closest edge of $PFZ_{froz}$ to the robot corresponds to the PFZ of a group. Based on the triangle inequality, shorter deviations lead to shorter trajectory lengths.
\end{proof} 
This bound also guarantees that our new navigation algorithm generates trajectory that are more natural than Frozone~\cite{frozone}. We integrate our CoMet-based navigation method with a DRL-based navigation scheme to evaluate it in real-world scenarios\cite{JiaPan1} and DWA \cite{DWA} to evaluate it in simulated environments. Figure \ref{fig:block}) shows the components of our navigation algorithm used to compute trajectories that are more natural in real-world scenes.

\begin{figure}[t]
      \centering
      \includegraphics[width=\columnwidth,height=3.75cm]{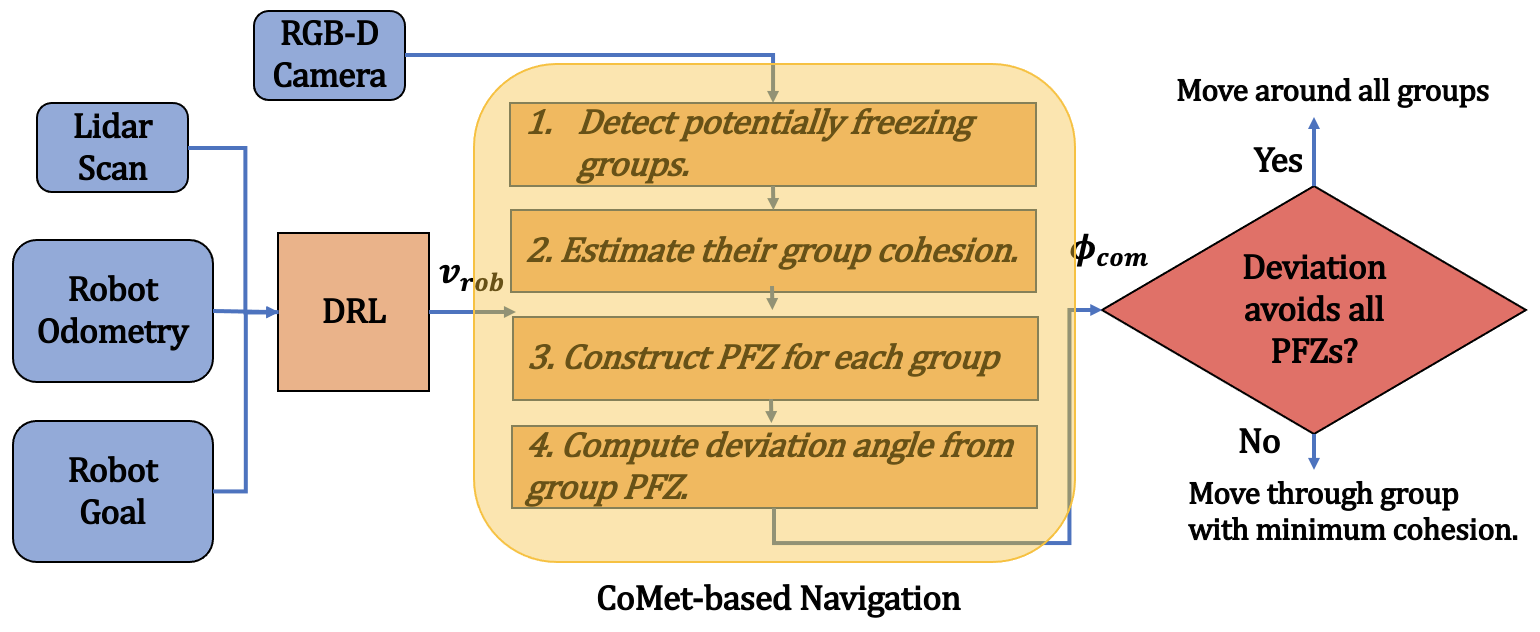}
      \caption {\small{Our Socially-Compliant Navigation Algorithm: We use a DRL framework used to guide the robot to its goal and handle static obstacles. CoMet-based navigation considers each group in the scene, identifies groups which could result in FRP, constructs group Potential Freezing Zones (PFZ), and computes a deviation angle to avoid such zones. Our formulation results in lower occurrence of freezing, lower deviations for the robot with respect to pedestrians and groups, and avoiding high cohesion groups by moving around them. In dense scenarios, when there is  no feasible solution for the deviation angle, our method navigates the robot through the group with the lowest cohesion. All these behaviors improve the naturalness of the robot trajectory's.}}
      \label{fig:block}
      \vspace{-15pt}
\end{figure}

\section{Implementation and Results}
In this section, we describe the implementation for computing group cohesion and socially-compliant navigation.  We then evaluate CoMet in different standard pedestrian datasets that are annotated with perceived group cohesion levels. We highlight the performance of our navigation algorithm and show the benefits over prior methods in terms of the following quantitative metrics:  freezing rate, deviation angle, and normalized path-length. We also qualitatively compare trajectories with an increased number of obstacles in the environment. 

\begin{figure*}[t]
\includegraphics[height=1.75in, width=\linewidth]{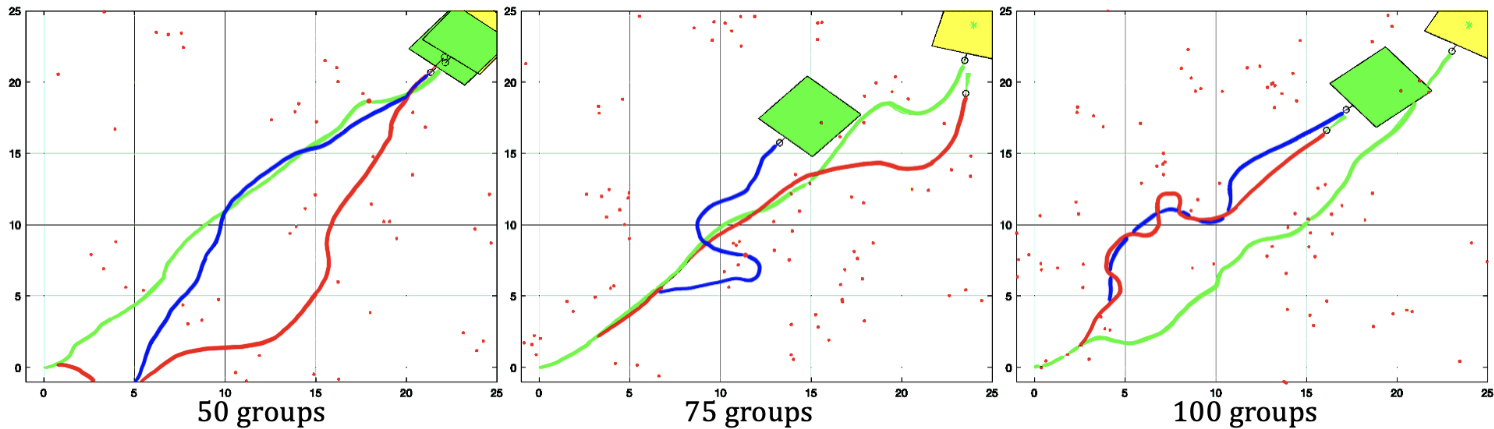}
\caption{\small{Qualitative evaluations of the trajectories generated using our algorithm (green), Frozone~\cite{frozone}(blue) and DWA\cite{DWA} (red) in scenarios with 50, 75 and 100 moving groups (red dots) that are non-cooperative for collision avoidance. The green and yellow squares represent the sensing regions of Frozone and CoMet-based navigation respectively. We observe that our formulation leads to lower deviation angles and more natural trajectories even in dense environments with tens to hundreds of obstacles. Both Frozone and DWA lead to unnatural trajectories with large deviations as the number of obstacles in the environment increases.}}
\label{fig:matlab-traj}
\vspace{-5pt}
\end{figure*}

\begin{figure*}[t]
\includegraphics[height=1.5in, width=\linewidth]{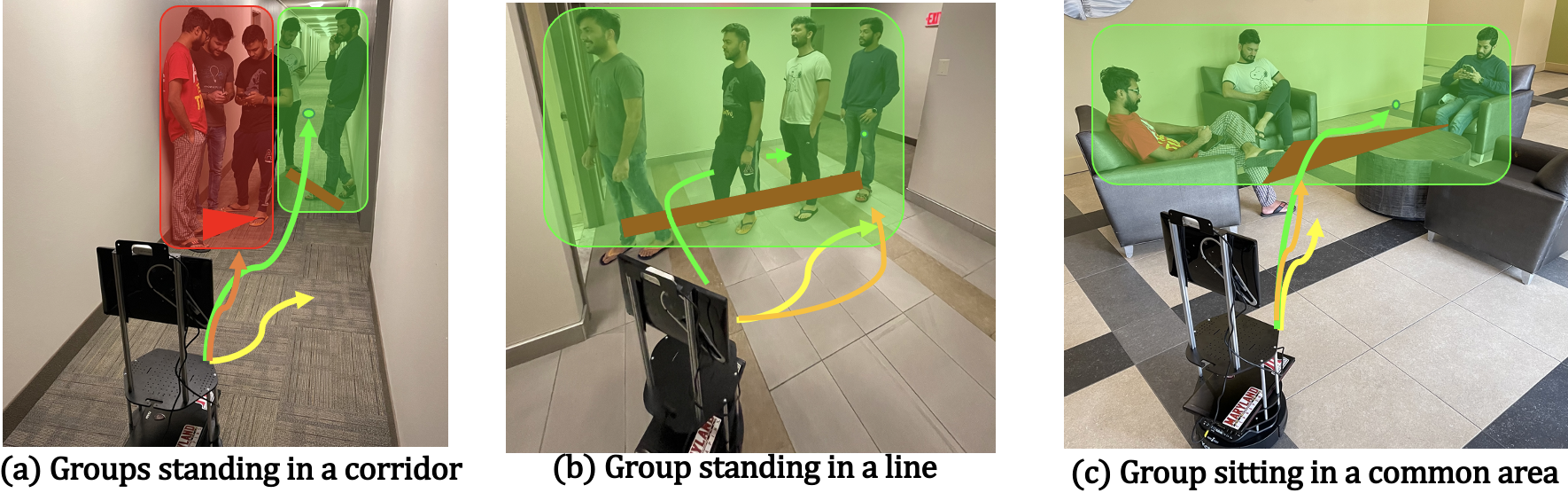}
\caption{\small{Qualitative evaluations of the trajectories generated using our algorithm (shown as green) in different scenarios. We also compare with the trajectories generated using Frozone~\cite{frozone}(shown as yellow) and a DRL-based algorithm ~\cite{JiaPan1}(shown as orange). Each group's PFZ is shown as a red region on the floor. We evaluate our method in three different real-world scenarios with tight spaces, with people standing or sitting. Our method differentiates between low (in green) and high cohesion (in red) groups, and navigates only between low cohesion groups. Frozone algorithm behaves in a conservative manner and halts the robot in dense scenarios. DRL~\cite{JiaPan1} prioritizes moving towards the goal and passes through high cohesion groups (see (a)). Overall, our approach results in socially-compliant trajectories.}}
\label{fig:real-scenarios}
\vspace{-5pt}
\end{figure*}

\subsection{Implementation}
In order to evaluate CoMet, we annotate groups in pedestrian datasets such as MOT, KITTI, ETH, etc. as low-, medium- and high-cohesion groups with the help of multiple human annotators. These annotated datasets are used as the ground truth since they reflect how humans perceive the groups in the videos. We choose these datasets because they depict groups in real-world scenarios with various lighting conditions, crowd densities and occlusions. We use a depth estimation method \cite{depth-from-rgb} with RGB images in the datasets to localize different pedestrians in the scene. We manually tune the weighing constants in the CoMet formulation ($K_p, K_w, K_s, K_i$) and set thresholds on the cohesion score to classify groups into the aforementioned categories based on the annotations in one of the datasets. We evaluate CoMet's precision and recall in the groups in all other datasets. Since there are no prior methods that compute cohesion, our evaluation is only against the human annotations.

We have evaluated our navigation algorithm using simulations created in MATLAB: (i) with tens to hundreds of groups, each with two to five members (Fig.\ref{fig:matlab-traj}) and a pre-assigned cohesion metric for each group and (ii) in corridor-like constrained scenarios with tens of pedestrians. The groups move linearly to a goal, and the simulated robot must take full responsibility to avoid collisions with them. We also evaluate our method on a real Turtlebot 2 robot mounted with an Intel RealSense RGB-D camera (for pedestrian tracking and localization) and a 2-D Hokuyo lidar (used by the DRL \cite{JiaPan1} method).

\subsection{Analysis}
\textbf{CoMet Classification:} Table \ref{Tab:Results2} highlights CoMet's classification precision and recall in multiple datasets. CoMet's parameters were tuned to improve its accuracy corresponding to detecting low-cohesion groups, which is required for navigation. This is reflected in the high precision and recall values in the second column.
CoMet observes pedestrians in these datasets for $\sim 5$ seconds. During this period, it is able to update its initial classification as pedestrians' trajectories change. For instance, a group initially perceived as high-cohesion may have its members move apart and is thereby classified as a low-cohesion group. Moreover, it is easier to detect features corresponding to proximity and walking speed than interactions between the pedestrians. This sometimes results in CoMet misclassifying medium- and high-cohesion groups in certain datasets. An interesting observation is that human annotators tend to classify groups in extremes, i.e. as either high-cohesion or low-cohesion groups. This leads to a low number of data points for medium-cohesion groups. This ground truth observation affects the effectiveness of our approach.

\begin{table}[t]
\resizebox{\columnwidth}{!}{
\begin{tabular}{|c|c|c|c|} 
\hline
\textbf{Dataset Video}\Tstrut \Tstrut & \textbf{Low-Cohesion} & \textbf{Medium-Cohesion} & \textbf{High-Cohesion} \\ [0.5ex] 
\hline
ADL-Rundle & 1.00/1.00 & - & 1.00/0.75 \\
\hline
AVG-TownCenter & 1.00/1.00 & 0.50/0.67 & 0.50/0.33 \\
\hline
ETH Jelmoli &  1.00/1.00 & - & 1.00/0.80\\
\hline
ETH Bahnhof & 1.00/1.00 & 1.00/0.67 & 0.80/1.00 \\
\hline
KITTI-16 & 1.00/1.00 & 1.00/0.50 & 0.33/1.00 \\
\hline
KITTI-17 & 1.00/1.00 & - & 1.00/0.667 \\
\hline
MOT17-08 & 1.00/1.00 & - & 1.00/0.50 \\
\hline
MOT17-11 & 1.00/1.00 & 0.75/1.00 & 1.00/0.833 \\
\hline
MOT20-02 & 1.00/1.00 & 0.34/0.33 & 0.50/0.33 \\
\hline
TUD & 1.00/0.857 & - & 1.00/1.00 \\
\hline
Venice & 0.875/0.875 & - & 0.938/0.60 \\
\hline
\end{tabular}}

\caption{ \small{\label{Tab:Results2} This table shows the precision (first value in each column) and recall (second) values for the three classes when CoMet observes groups in the video for $\sim 5$ seconds. CoMet's parameters have been tuned based on the ground truth in one of the datasets. We observe good accuracy for high-cohesion and low-cohesion groups. The original datasets have fewer occurrences of medium-cohesion groups, which impacts our precision.}}
\vspace{-10pt}
\end{table}

\textbf{Socially-Compliant Navigation:} We first qualitatively compare the trajectories (Fig.\ref{fig:matlab-traj}) of DWA \cite{DWA} (in red), a DWA-Frozone hybrid \cite{frozone}(in blue), and DWA and CoMet-based navigation (in green) in simulated environments with non-cooperative walking groups of obstacles. We observe that for the same set of dynamic obstacles, our approach computes smaller deviations while also avoiding collisions. DWA and the DWA-Frozone hybrids lead to conservative, and highly sub-optimal deviations from the goal direction. Although the exact set of obstacles each robot faces could be different depending on the trajectories they take, we observe that our method's deviations at any instant never exceed the deviations of the other two methods in comparison. These results reinforce proposition \ref{proposition1}.  

We also quantitatively compare the aforementioned three methods in environments with varying numbers ($10 - 50$) of pedestrians in a corridor-like scenario (which constricts the free space available to the robot). Pedestrians are given random initial locations and velocities, based on which they are classified into groups and $PFZ_{froz}$ and group PFZs ($PFZ^k$) are computed. The robot needs to navigate through the pedestrians to reach its goal. We use the following metrics: (1) \textit{Average deviation angle} measured relative to the line connecting the start and goal locations; (2) \textit{Freezing Rate} measured as the number of times the robot halted/froze over the total number of trials; and (3)  \textit{Normalized Path Length} measured as the robot's path length over the length of the line connecting the start and goal locations. 

Our method results in  lower values with respect to all these metrics, as compared to DWA and the DWA-Frozone hybrid on the same scenarios. As the crowd size, density, or number of groups increase, Frozone's conservative formulation makes the robot freeze at a high rate. This is because Frozone forbids the robot from avoiding a robot from in-front of the person, to improve pedestrian-friendliness in low- to medium-density scenes. On the other hand, we observe that our method produces trajectories with high social compliance and naturalness by reducing the occurrence of freezing behavior.

\begin{table}[t]
\resizebox{\columnwidth}{!}{
\begin{tabular}{|c|c|c|c|c|c|c|} 
\hline
\textbf{Metrics}\Tstrut \Tstrut & \textbf{Method} & \textbf{10 peds} & \textbf{20 peds} & \textbf{30 peds} & \textbf{40 peds} & \textbf{50 peds}  \\ [0.5ex] 
\hline
\multirow{3}{*}{\rotatebox[origin=c]{0}{\makecell{ \textbf{Avg. Deviation}\\\textbf{Angle}\\(lower better)}}} & DWA & 43.80 & 45.25 & 41.78 & 43.64 & 49.88 \\
 & DWA + Frozone & 46.41 & 46.67 & 42.85 & 50.80 & 53.22\\
 & DWA + Our Method & \textbf{41.07} & \textbf{44.31} & \textbf{41.11} & \textbf{37.47} & \textbf{34.19}\\
\hline
  
\multirow{3}{*}{\rotatebox[origin=c]{0}{\makecell{\textbf{Freezing Rate}\\(lower better)}}} & DWA & 0 & 0.37 & 0.39 & 0.36 & 0.44 \\
& DWA + Frozone & 0 & 0.29 & 0.25 & 0.43 & 0.57 \\
& DWA + Our Method & \textbf{0} & \textbf{0} & \textbf{0} & \textbf{0} & \textbf{0} \\
\hline

\multirow{3}{*}{\rotatebox[origin=c]{0}{\makecell{\textbf{Normalized}\\\textbf{Path Length}\\(lower better)}}}  & DWA & 1.33 & 1.31 & 1.42 & 1.47 & 1.49 \\
 & DWA + Frozone & 1.46 & 1.45 & 1.55 & 1.59 & 1.51\\
 & DWA + Our Method & \textbf{1.12} & \textbf{1.29} & \textbf{1.27} & \textbf{1.35} & \textbf{1.35}\\
\hline
\end{tabular}}

\caption{ \small{\label{Tab:Results3} We compare different navigation methods (DWA \cite{DWA}, DWA-Frozone hybrid ~\cite{frozone} and DWA-CoMet-based navigation algorithm based on three metrics. We observe that our method consistently results in lower values corresponding to all these metrics. This signifies improved naturalness of the robot's trajectory computed using our approach.}}
\vspace{-15pt}
\end{table}

\textbf{Real-world Evaluations:} We qualitatively compare our method's trajectories with Frozone's and a DRL algorithm's ~\cite{JiaPan1} trajectories . We highlight the differences in Fig. \ref{fig:cover-image}. Our approach is able to identify low-cohesion groups successfully and navigate through them without interfering with high-cohesion groups. In contrast, Frozone halts the robot completely, since it does not assume pedestrian cooperation in its formulation. The DRL method prioritizes reaching the goal with the minimum path length and therefore navigates through groups regardless of their cohesion. Therefore, the DRL algorithm can generate obtrusive trajectories.

\subsection{Other Improvements using CoMet}
Existing high-fidelity simulators for training Deep Reinforcement Learning methods for navigation simulate dynamic pedestrians as individual obstacles \cite{crowdsteer-ijcai}. Large-scale simulators \cite{menge-ros} use well-known motion models for individual pedestrians, and do not take into account model group behaviors. CoMet can be used to reverse engineer and simulate group behaviors based on different cohesion scores. For instance, simulated groups with high cohesion can have low intra-group member proximities, lower than average walking speeds, and larger group sizes. In 3-D simulators with human models, interactions can be modeled based on body orientations.

From an HRI standpoint, in social-distance monitoring robots \cite{covid-robot}, CoMet could help predict group properties such as inter-personal relationships between members based on their cohesions. This would help identify the groups that need to be issued warnings regarding maintaining social distancing. This leads to more apt interactions between the robot and the humans.

We present a novel method to compute the cohesion of a group 
of people in a crowd using visual features. We use our cohesion metric to design a novel robot navigation algorithm that results in socially-compliant trajectories. We highlight the benefits over previous algorithms in terms of the following metrics: reduced freezing, deviation angles, and path lengths.  We test our cohesion metric in annotated datasets and observe a high precision and recall.  

Our method has some limitations. We model cohesion through a linear relationship between the features, which may not work in all scenarios. In addition, there are other characteristics used to estimate cohesion, including age, gender, environmental context, cultural factors, etc. that we do not consider. Our approach also depends on the accuracy of how these features are detected, which may be affected due to lighting conditions and occlusions. Our navigation assumes that different groups exhibit varying levels of cohesions, which may not hold all the time. As part of future work, we hope to address these limitations and evaluate our approach in crowded real-world scenes.

\bibliographystyle{IEEEtran}
\bibliography{References}
\end{document}